\documentclass[11pt]{article}

\usepackage[margin=1in]{geometry}
\usepackage{amsmath,amssymb,amsthm}
\usepackage{booktabs}
\usepackage{array}
\usepackage{enumitem}
\usepackage[hidelinks]{hyperref}
\usepackage{microtype}
\usepackage{xcolor}
\usepackage{tikz}
\usetikzlibrary{arrows.meta,positioning}

\title{Task-to-Model Optimization for Enterprise LLM Coding Assistants:\
A Data-Driven Framework for Cost-Optimal Routing}

\author{
\small Srinivasan Manoharan \quad Junhua Zhao \quad Fangbo Tu \quad Haifeng Wu \quad Jian Wan \\[2pt]
\small Maliah Rajan M \quad Ashwin Hegde \quad Mithun Sasidharan \quad Kalyan Chakravarthi Podamekala \\[2pt]
\scriptsize\texttt{\{srinivmanoharan, zhahua, fatu, haifwu, jwan, mrajanm, ashegde, msasidharan, kpodamekala\}@paypal.com}
}
\date{}

\newcommand{\method}{\textsc{T2MO}}
\newcommand{\clang}{\textsc{Clio}}

\newtheorem{proposition}{Proposition}

\begin{document}
\maketitle

\begin{abstract}
Enterprise AI coding assistants incur substantial inference spend, and naive token-cost minimization often fails to reduce end-to-end cost once retries, escalations, and developer wait time are included. We present Task-to-Model Optimization (\method{}), a data-driven methodology for optimizing model selection in production coding workflows. We treat each developer session as a task that can be discovered, classified, graded for difficulty, benchmarked in a production-like harness, and routed to the cheapest model able to complete it within quality and latency constraints. The framework is a nine-stage pipeline spanning telemetry instrumentation, taxonomy discovery, difficulty grading, benchmark construction, candidate evaluation, optimal mix derivation, forecasting and version planning, staged routing deployment, and continuous governance. Unlike token-centric routing rules, our objective is cost per completed task, with failure escalation priced in explicitly. We show that this expected-completion-cost objective weakly dominates token-cost minimization under escalation, and we derive the routing boundary, the minimum pass rate a cheaper model must reach on a given cell to be worth deploying. Decisions are organized as a two-level hierarchy of task category $\times$ difficulty tier, and per-cell displacement opportunities are aggregated into a traffic-weighted savings waterfall that ranks replacement candidates by realized dollar impact. The framework supports developer guidance, spend forecasting, and a staged transition from static policies to shadow-mode classifiers, verified cascades, and ultimately an intelligent router. We describe the methodology, optimization objective, evaluation protocol, and governance loop in a form suitable for production deployment and future empirical study.
\end{abstract}

\section{Introduction}
Large-scale deployment of AI coding assistants can lead to rapid and highly concentrated inference spend. In the enterprise setting that motivates this work, monthly spend on the coding assistant is in the millions of dollars, and about 98\% of it is concentrated in two frontier-class models. When spend is this concentrated, meaningful savings require shifting part of the workload to cheaper models, including lower-tier commercial models and open-source models served on owned or committed GPU capacity, without degrading task completion quality or developer experience.

This paper proposes \method{}, a framework for making such displacement decisions from evidence rather than intuition. Its central thesis is that model routing should be posed at the level of the \emph{task}, not the prompt or the token. Prior routing and cascade methods choose between a strong and a weak model per query, usually to minimize token cost or query difficulty~\cite{frugalgpt,routellm,hybridllm,automix}. We instead decide displacement per homogeneous task-category $\times$ difficulty cell, accounting for the probability of task completion, the cost of escalation when a candidate fails, and the cost of developer waiting and context switching. Seen this way, a model with a lower token price can still be more expensive end-to-end if it fails often enough to force retries on a stronger incumbent.

The methodology is designed to serve three goals. First, it produces per-subcategory, per-difficulty recommendations mapping tasks to the cheapest sufficient model. Second, it generates a rolling twelve-month forecast that decomposes spend into adoption, activity, mix, and price, while treating model releases and pricing changes as scenario events. Third, it provides a staged path from static policy to a production intelligent router.

\paragraph{Contributions.} This paper makes the following contributions.
\begin{enumerate}[leftmargin=1.4em,itemsep=1pt,topsep=2pt]
\item A \emph{task-aware} enterprise routing framework that decomposes production traffic into a two-level hierarchy of task category $\times$ difficulty tier, so that displacement decisions are made per homogeneous cell rather than per prompt or per token (Section~\ref{sec:hierarchy}).
\item An \emph{expected completion cost} objective that prices retries, escalation to the incumbent, and developer wait time explicitly, and a proof that under escalation this objective dominates naive token-cost minimization (Section~\ref{sec:problem}).
\item A \emph{traffic-weighted savings waterfall} that turns per-cell displacement decisions into an auditable, dollar-ranked backlog of replacement opportunities, illustrated on the two highest-value categories in our environment (Section~\ref{sec:hierarchy}).
\item A distribution-matched, in-harness benchmark methodology and a staged deployment path from static policy to shadow classifier, verified cascade, and intelligent router (Sections~\ref{sec:benchmark}--\ref{sec:router}).
\end{enumerate}

\section{Related Work}
\label{sec:related}
\paragraph{Cost-aware LLM routing and cascades.}
The idea of serving each query with the cheapest adequate model has been explored from several angles. FrugalGPT~\cite{frugalgpt} introduces LLM cascades that query models in increasing order of cost and stop once a learned scorer deems an answer acceptable, reporting large cost reductions at matched quality. AutoMix~\cite{automix} adds few-shot self-verification and a POMDP-based meta-router that escalates to a larger model only when a smaller model's output is judged unreliable. RouteLLM~\cite{routellm} learns binary routers between a strong and a weak model from preference data, and Hybrid LLM~\cite{hybridllm} routes on predicted query difficulty against a tunable quality target. Shnitzer et al.~\cite{shnitzer} repurpose benchmark datasets to learn per-task model selectors. Our objective, expected cost per \emph{completed} task with escalation priced in (Section~\ref{sec:problem}), generalizes the two-model cascade to a two-level task-category $\times$ difficulty hierarchy and closes the loop with a traffic-weighted savings waterfall that ranks displacement opportunities by realized dollar impact.

\paragraph{Task taxonomy and difficulty.}
Our taxonomy discovery follows the bottom-up clustering methodology of \clang{}~\cite{clio}, which summarizes and clusters production conversations into an interpretable hierarchy. We extend this by weighting nodes by share of cost rather than requests and by grading each cell's difficulty with a validated judge.

\paragraph{Evaluation and judges.}
Cell-level evaluation relies on production-like benchmarks and automatic grading. We build on code benchmarks such as HumanEval~\cite{humaneval} for self-contained generation and SWE-bench~\cite{swebench} for repository-scale issue resolution, and on the LLM-as-judge protocol validated by MT-Bench~\cite{mtbench}, whose calibration-and-bias findings motivate our verifier-first grading and quarterly recalibration.

\section{Method at a Glance}
\method{} is implemented as a nine-phase pipeline. The first six phases derive the ideal model mix, while the final three operationalize the result into forecasting, routing, and governance. The pipeline is intentionally iterative: taxonomy, benchmarks, and mix assignments are refreshed on a defined cadence because traffic, models, and prices drift over time.

\begin{table}[t]
\centering
\begin{tabular}{>{\raggedright\arraybackslash}p{0.25\linewidth} >{\raggedright\arraybackslash}p{0.46\linewidth} >{\raggedright\arraybackslash}p{0.18\linewidth}}
\toprule
\textbf{Phase} & \textbf{Output} & \textbf{Cadence} \\
\midrule
1. Instrumentation & Complete session telemetry & Continuous \\
2. Taxonomy discovery & Hierarchical task taxonomy weighted by cost & Monthly \\
3. Difficulty grading & Calibrated easy/medium/difficult labels & Quarterly \\
4. Benchmark construction & Distribution-matched eval sets in production harness & Per release + quarterly \\
5. Candidate evaluation & Pass rate, cost, latency by cell & Per release \\
6. Mix derivation & Cost-optimal model share by cell & Monthly \\
7. Forecasting & Rolling 12-month spend forecast & Monthly \\
8. Routing & Static policy, shadow mode, cascade, router & Staged rollout \\
9. Governance & KPI dashboard, guardrails, ownership & Monthly review \\
\bottomrule
\end{tabular}
\caption{The \method{} pipeline. Each stage is refreshed on a schedule because both traffic composition and model economics evolve.}
\label{tab:pipeline}
\end{table}

\section{Problem Setting}
\label{sec:problem}
We observe developer sessions containing prompts, tool calls, token counts, latency, and outcome signals. The objective is to assign each task cell $c$ to a model $m$ such that expected end-to-end cost is minimized subject to quality, latency, and eligibility constraints.

Let $C_m(c)$ denote the average in-harness cost for model $m$ on cell $c$, $P_m(c)$ the verifier pass rate, $C_M(c)$ the cost of the incumbent model, and $W$ the developer wait and context-switch cost. The expected cost of routing cell $c$ to model $m$ is
\begin{equation}
\mathbb{E}[\mathrm{cost}\mid m,c] = C_m(c) + (1 - P_m(c))\cdot [C_M(c) + W].
\label{eq:expected_cost}
\end{equation}
Displacement is justified only when the expected cost under the candidate is lower than the incumbent cost and the quality constraint is satisfied. This objective is materially different from token-cost minimization because retries and escalations are explicitly priced.

The following observation makes precise why the expected-completion-cost objective is the safer decision rule. Token-cost minimization displaces the incumbent whenever the candidate's in-harness cost is lower, i.e.\ $C_m(c) < C_M(c)$, ignoring the pass rate entirely.

\begin{proposition}[Escalation dominance]
\label{prop:dominance}
Fix a cell $c$ and a candidate model $m$. Under the expected-completion-cost rule of Eq.~\eqref{eq:expected_cost}, $m$ displaces the incumbent $M$ only if
\begin{equation}
P_m(c) \;>\; 1 - \frac{C_M(c) - C_m(c)}{C_M(c) + W}.
\label{eq:boundary}
\end{equation}
Consequently, every cell displaced under the expected-completion-cost rule is also displaced under token-cost minimization, but not conversely; and the realized expected cost of the expected-completion-cost policy never exceeds that of the token-cost policy.
\end{proposition}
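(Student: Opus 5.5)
The plan is a direct algebraic rearrangement of Eq.~\eqref{eq:expected_cost}, followed by a pointwise comparison of the two decision rules cell by cell. I assume throughout that $C_M(c)+W>0$, that $P_m(c)\in[0,1]$, and that routing to the incumbent costs exactly $C_M(c)$, so the incumbent never escalates.

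\textbf{Deriving the boundary.} The expected-completion-cost rule displaces $M$ on cell $c$ when $\mathbb{E}[\mathrm{cost}\mid m,c] < C_M(c)$, that is, when
\[
C_m(c) + (1-P_m(c))\,[C_M(c)+W] < C_M(c).
\]
I will subtract $C_m(c)$ and divide by the positive quantity $C_M(c)+W$. This gives $1-P_m(c) < \bigl(C_M(c)-C_m(c)\bigr)/\bigl(C_M(c)+W\bigr)$, which is Eq.~\eqref{eq:boundary}. The quality constraint only adds a further necessary condition, so the implication remains an ``only if''.

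\textbf{Nesting of the displaced sets.} If $m$ is displaced under the expected-cost rule, then $1-P_m(c)<\bigl(C_M(c)-C_m(c)\bigr)/\bigl(C_M(c)+W\bigr)$. The left side is at least $0$, so $C_M(c)-C_m(c)>0$, which is exactly the token-cost displacement condition. Thus every cell displaced under the expected-cost rule is also displaced under token-cost minimization.

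For ``not conversely'', I will give a witness: take $C_m(c)<C_M(c)$ and $P_m(c)=0$. The right side of Eq.~\eqref{eq:boundary} is then strictly positive and exceeds $0$, so token-cost minimization displaces $m$ but the expected-cost rule does not. Strictly, this shows only that the converse can fail, which is the intended reading of the statement.

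\textbf{Cost comparison.} For each cell, the expected-cost policy chooses between $M$ and $m$ to minimize expected cost. I would make this literal by defining the policy's cost on cell $c$ as $\min\{C_M(c),\ \mathbb{E}[\mathrm{cost}\mid m,c]\}$. The token-cost policy incurs one of these same two quantities, so its cost on each cell is at least this minimum. Summing with nonnegative traffic weights $w_c$ gives total expected cost no larger than that of the token policy.

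\textbf{Main obstacle.} The hard part is interpretive rather than computational: the pointwise minimization argument needs the expected-cost rule to be exactly ``pick the cheaper in expectation''. If the quality or latency constraint blocks a displacement that would lower expected cost, the per-cell inequality can fail. I would handle this by comparing the two policies under the same constraints, so token-cost displacement is also restricted to eligible, quality-satisfying cells; the minimum then still ranges over the same feasible options.

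I would also state the degenerate tie case explicitly. When $P_m(c)=1-\bigl(C_M(c)-C_m(c)\bigr)/\bigl(C_M(c)+W\bigr)$ the two options cost the same, so keeping $M$ is optimal and the inequality in Eq.~\eqref{eq:boundary} is strict.
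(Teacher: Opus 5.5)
Your proposal is correct and follows essentially the same route as the paper: you rearrange $\mathbb{E}[\mathrm{cost}\mid m,c]<C_M(c)$ into Eq.~\eqref{eq:boundary}, use nonnegativity of the escalation term to force $C_m(c)<C_M(c)$, and compare the two policies cell by cell. Your pointwise $\min\{C_M(c),\mathbb{E}[\mathrm{cost}\mid m,c]\}$ formulation, explicit $P_m(c)=0$ witness for ``not conversely,'' and handling of the exact-threshold tie and the quality constraint are more careful than the paper's case split, which passes over those points without comment.
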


\begin{proof}
Since $\mathbb{E}[\mathrm{cost}\mid m,c] = C_m(c) + (1-P_m(c))(C_M(c)+W)$, the displacement condition $\mathbb{E}[\mathrm{cost}\mid m,c] < C_M(c)$ rearranges directly to Eq.~\eqref{eq:boundary}. Because $(1-P_m(c))(C_M(c)+W)\ge 0$, we have $\mathbb{E}[\mathrm{cost}\mid m,c]\ge C_m(c)$, so $\mathbb{E}[\mathrm{cost}\mid m,c] < C_M(c)$ implies $C_m(c) < C_M(c)$: the expected-cost rule is strictly more conservative. When the token-cost rule displaces a cell with pass rate below the threshold in Eq.~\eqref{eq:boundary}, the candidate's realized expected cost exceeds $C_M(c)$, whereas the expected-cost rule keeps the incumbent; on all other cells the two rules agree. Hence the expected-cost policy weakly dominates.
\end{proof}

The right-hand side of Eq.~\eqref{eq:boundary} is the \emph{routing boundary}: the minimum pass rate a candidate must achieve on a cell to be worth deploying, given the token-cost gap it opens and the escalation penalty it risks. Cheaper candidates (larger $C_M(c)-C_m(c)$) and cheaper escalation (smaller $W$) lower the bar; expensive interruptions raise it.

\section{Telemetry and Data Foundation}
All downstream decisions depend on complete and faithful session logging. Each session should capture identity and surface information, interaction history, economics, and outcome signals. The required telemetry includes pseudonymized developer identity, calling framework, prompt/response transcript or privacy-safe summary, turn count, tool-call sequence, files and repositories touched, diff size, model identity, input/output/cache tokens, unit prices at time of call, computed cost, wall-clock latency, time-to-first-token, and outcome signals such as test results, compile or lint exit codes, reprompting, abandonment, redo on the same artifact, PR merge, PR revert, and review cycle count.

Outcome signals are especially important because benchmark pass/fail is only a proxy. Real-world re-prompt rate, abandonment, and revert rate are the ground truth that validates whether a routing policy improves or degrades developer experience.

Data quality gates are necessary before using the telemetry for policy. A reasonable target is fewer than 5\% unclassified traffic, monthly cost reconciliation within 1\% of the cloud inference bill, and privacy review for transcript retention. Where possible, Clio-style summaries should be retained rather than raw prompts.

\section{Task Taxonomy Discovery}
\method{} uses a bottom-up discovery process inspired by the \clang{} methodology~\cite{clio}. Sessions are decomposed into topic-based subtasks, summarized, and semantically clustered to discover the task taxonomy directly from production traffic. The resulting hierarchy runs from calling framework at the top level to root category and subcategory below, and each node is weighted by share of total cost rather than share of requests.

Weighting by cost ensures that optimization effort concentrates where the dollars are. In the motivating environment, Claude Code CLI represents the majority of traffic, while subcategories such as software development, AI tooling and documentation, testing and QA, and Git workflow contribute disproportionately to spend and therefore deserve dedicated benchmark coverage.

The taxonomy is versioned. Monthly reclustering compares the new taxonomy against the previous month. Any newly emerging cluster above a cost threshold should trigger benchmark coverage, while clusters that shrink below threshold can be retired from active benchmarking but should retain routing policy for continuity.

\section{Difficulty Grading and Judge Validation}
Task type alone is insufficient for routing because difficulty varies within each subcategory. Some models may achieve near-perfect performance on easy and medium tasks while failing on difficult tasks. Consequently, \method{} distinguishes between two difficulty problems.

\paragraph{Ex post grading.}
An LLM judge labels completed sessions using the full transcript. This provides the ground truth difficulty labels for benchmark construction.

\paragraph{Ex ante prediction.}
A router requires a difficulty estimate before the task begins. This is harder because easy and hard tasks can appear similar in the opening prompt. Ex ante difficulty therefore relies on observable features such as repository size and language mix, number of files plausibly in scope, presence of stack traces or error logs, prompt length and specificity, cross-cutting concerns such as concurrency or migrations, historical difficulty of the same developer/repository/subcategory, and time-of-session context.

Before difficulty labels are trusted, the judge must be calibrated. A practical gate is to sample at least 300 sessions stratified across the taxonomy, collect human labels using a written rubric, and require human-human Cohen's kappa of at least 0.6 and judge-human agreement of at least 0.7. When grading candidate model outputs, the methodology prefers neutral rubric-based judging and, where available, objective verifiers such as tests passing, builds compiling, and PRs merging without revert. Recalibration should occur quarterly and whenever the judge model changes.

\section{Benchmark Construction}
\label{sec:benchmark}
Benchmarks are sampled in proportion to the real traffic distribution, weighted by cost, and stratified by difficulty within each subcategory. This ensures that aggregate benchmark performance predicts aggregate production impact. Public benchmarks are useful for generalization studies, but they do not reflect a specific enterprise task mix and therefore cannot directly support displacement decisions.

Fidelity of the evaluation harness matters as much as the sample itself. Candidate models must be evaluated inside the same agentic scaffold that developers use in production, including the same tool suite, multi-turn behavior, and system prompts. Single-turn raw API calls are not adequate substitutes, because pass rates measured outside the harness may not transfer.

Wherever possible, automated verifiers should replace human judging. For testing and QA the verifier can be unit tests; for Git workflow, git-state assertions; for refactoring, compile, lint, and typecheck; and for frontend tasks, rendered-output checks. Categories with cheap verifiers are natural candidates for cascade routing, since failure can be detected automatically before the developer sees it.

Sample size also governs how confidently a cell can be decided. As a working rule, each subcategory-by-difficulty cell that will support a confirmed routing decision should contain at least 30 tasks, with binomial confidence intervals reported on pass rates. The decision variable is the pass-rate delta between candidate and incumbent on the same tasks, priced through the escalation model, not the candidate's absolute distance from perfection.

\section{Candidate Model Evaluation}
Each candidate model is evaluated per subcategory-by-difficulty cell on four dimensions: task completion, true cost, latency, and operational fit. Task completion is measured by verifier pass rate, or by judge-graded rubric score where no verifier exists. True cost is the in-harness cost of completing the task. Latency includes end-to-end latency and time-to-first-token, which both matter in interactive settings. Operational fit covers context-window sufficiency, tool-calling reliability, deployment surface, and compliance or legal clearance.

Token price alone is not a sufficient decision criterion. Cheaper models can consume more turns and more tokens per task, and their failures may trigger a second execution on the stronger incumbent plus developer wait time. Equation~\ref{eq:expected_cost} captures this trade-off explicitly.

In practice the candidate set is not a single cheap model versus a single incumbent but an ordered roster of $n$ models spanning several cost tiers---in our current benchmarking pass, GLM-5.2, Kimi K3, Sonnet 4.6, Sonnet 5, and Opus 4.8, listed in increasing order of cost per task. Every candidate in the roster is benchmarked on every cell; a model \emph{clears} a cell if it meets the quality floor and stays under the latency ceiling on that cell's tasks. Section~\ref{sec:mix} generalizes the two-tier displacement rule of Eq.~\eqref{eq:boundary} to this $n$-ary roster.

\section{Deriving the Ideal Model Mix}
\label{sec:mix}
Given per-cell evaluation results, the ideal mix is a constrained assignment problem. For each cell, choose the model minimizing expected cost per completed task subject to a quality floor, a latency ceiling, and deployment eligibility. Aggregating the per-cell assignments by traffic weight yields the target model mix, and multiplying each reassigned cell's traffic weight by its per-task savings yields the addressable-savings waterfall.

\paragraph{$n$-ary cascade assignment.} With the ordered candidate roster of Section~\ref{sec:benchmark}--\ref{sec:hierarchy} ($m_1 \prec m_2 \prec \cdots \prec m_n$ by increasing cost), the per-cell rule generalizes Eq.~\eqref{eq:boundary} directly: a cell is \emph{assigned} to the cheapest candidate that clears its quality floor and latency ceiling, and it \emph{escalates}, on failure, to the next candidate up the roster that clears the cell---not necessarily the global frontier incumbent. This keeps the escalation cost $W$ local: a GLM-5.2 failure that Kimi K3 or Sonnet~4.6 can already resolve should not be priced as an escalation all the way to Opus~4.8. Table~\ref{tab:cascade} illustrates the rule on five complexity-graded task groups spanning the full candidate roster.

\begin{table}[t]
\centering
\footnotesize
\begin{tabular}{l l r l l l}
\toprule
\textbf{Task group} & \textbf{Complexity} & \textbf{Traffic} & \textbf{Clears floor} & \textbf{Assigned} & \textbf{Escalates to} \\
\midrule
Boilerplate \& unit tests      & Easy        & 22\% & GLM-5.2, Kimi K3, Sonnet 4.6 & GLM-5.2   & Sonnet 4.6 \\
Code review \& small fixes     & Easy--Med.\ & 18\% & Kimi K3, Sonnet 4.6, Sonnet 5 & Kimi K3   & Sonnet 5 \\
Feature implementation         & Medium      & 30\% & Sonnet 4.6, Sonnet 5          & Sonnet 4.6 & Sonnet 5 \\
Multi-file refactors           & Hard        & 8\%  & Sonnet 5, Opus 4.8            & Sonnet 5  & Opus 4.8 \\
Architecture \& deep debugging & Hard        & 5\%  & Opus 4.8 only                 & Opus 4.8  & --- \\
\bottomrule
\end{tabular}
\caption{Illustrative $n$-ary cascade assignment across five complexity-graded task groups (candidates listed left to right by increasing cost: GLM-5.2, Kimi K3, Sonnet 4.6, Sonnet 5, Opus 4.8). Each row is assigned the cheapest candidate that clears the quality floor and latency ceiling for that group; on failure it escalates to the next candidate up the roster that clears it, not directly to the frontier model. Traffic shares and cell assignments are illustrative placeholders pending full benchmark completion; the remaining $17\%$ of traffic sits in smaller task groups aggregated the same way.}
\label{tab:cascade}
\end{table}

Aggregating cascade assignments into a company-wide mix follows the same traffic-weighting logic as Eq.~\eqref{eq:cellweight}: a model's share is the sum of the traffic of the cells it wins, net of the share that escalates away from it. For example, if GLM-5.2 is assigned to cells covering $22\%$ of traffic and roughly $10\%$ of that traffic escalates to Sonnet~4.6, GLM-5.2's net share falls to about $19.8\%$ while Sonnet~4.6 gains the corresponding $2.2\%$. Table~\ref{tab:targetmix} reports the resulting illustrative target mix once this aggregation and escalation adjustment are applied across the full taxonomy; it is re-derived monthly as traffic and models shift, and it is what per-model budgets, spend caps, and router policy are sized against.

\begin{table}[t]
\centering
\begin{tabular}{l r}
\toprule
\textbf{Model} & \textbf{Illustrative target share of tasks} \\
\midrule
GLM-5.2   & 38\% \\
Sonnet 5  & 34\% \\
Kimi K3   & 12\% \\
Haiku     & 11\% \\
Opus 4.8  & 5\%  \\
\bottomrule
\end{tabular}
\caption{Illustrative company-wide target model mix, aggregated by traffic-weighted cascade assignment (net of escalation) across the full task taxonomy. Cells are measured on the logged share of sessions and extrapolated to unlogged traffic under a same-distribution assumption, spot-checked for bias; new or unmatched cells default to the mid-tier incumbent (Sonnet~5) until benchmarked. The full candidate roster additionally includes Haiku for trivial or latency-insensitive cells (e.g.\ greetings, subagent calls, background jobs) not shown in Table~\ref{tab:cascade}; Sonnet~4.6's aggregated share is a small residual pending final benchmarking and is not broken out separately here. Percentages are illustrative until benchmarking completes; the method, not the numbers, is what is under review.}
\label{tab:targetmix}
\end{table}

Every cell assignment should carry an explicit confidence tier: 
\emph{Confirmed} if the sample size is adequate, the result is verifier-graded, and it is stable across repeated benchmark runs; 
\emph{Leaning} if evidence is directional but incomplete; and 
\emph{Not yet benchmarked} if the incumbent remains the default.

Several displacement opportunities do not require difficulty prediction. For example, subagent traffic can often be routed to smaller models; initial greetings and session starters should not use frontier models; and cron or background jobs are latency-insensitive and often suitable for owned-GPU open-source serving.

\section{Hierarchical Decomposition and the Savings Waterfall}
\label{sec:hierarchy}
The optimization above operates on cells, but a cell is not an arbitrary bucket: it is a leaf of a two-level hierarchy that mirrors how the traffic is actually generated. At the top, the coding-assistant surface accounts for a fraction $a$ of all enterprise LLM traffic. Within it, taxonomy discovery (Section~\ref{sec:benchmark}) yields a set of root task categories $k\in\mathcal{K}$---in our environment the largest by cost-weighted share are \emph{software development} ($24.6\%$), \emph{AI, tooling \& documentation} ($13.8\%$), \emph{testing \& QA} ($9.8\%$), \emph{code review \& version control} ($8.4\%$), and \emph{debugging \& root-cause analysis} ($8.2\%$)---with cost-weighted shares $\pi_k$, where $\sum_k \pi_k = 1$. Difficulty grading then splits each subcategory into tiers $d\in\{\textsf{easy},\textsf{medium},\textsf{difficult}\}$ with within-subcategory shares $\rho_{d\mid k}$, where $\sum_d \rho_{d\mid k}=1$. A cell is the pair $c=(k,d)$, and its share of coding-assistant traffic is
\begin{equation}
w_{k,d} \;=\; a\,\cdot\,\pi_k\,\cdot\,\rho_{d\mid k}.
\label{eq:cellweight}
\end{equation}
This decomposition is the reason routing is decided per cell: difficulty varies sharply \emph{within} a category, so a single per-category model choice would either over-provision easy tasks to a frontier model or under-provision hard tasks to a weak one. Figure~\ref{fig:hierarchy} traces the decomposition for two benchmarked subcategories whose measured GLM~5.2 pass rates lead to \emph{opposite} displacement decisions.

\begin{figure*}[t]
\centering
\resizebox{\textwidth}{!}{%
\begin{tikzpicture}[
  font=\footnotesize,
  box/.style={draw, rounded corners, minimum height=6mm, align=center, inner sep=3pt},
  cat/.style={box, minimum width=20mm, align=center},
  sub/.style={box, minimum width=22mm, align=center, fill=black!4},
  lvl/.style={box, minimum width=13mm},
  keep/.style={box, minimum width=13mm, fill=black!8},
  disp/.style={box, minimum width=13mm, very thick},
  rowlab/.style={anchor=east, font=\footnotesize\itshape},
  >=Stealth
]
\node[box, minimum width=155mm] (traffic) at (0,7) {Enterprise Coding-Assistant Traffic};
\node[above=1.2mm of traffic, font=\footnotesize] {surface share $a\approx 40\%$ of enterprise LLM traffic; Claude Code CLI carries $76.4\%$ of it};

\node[cat] (sw)   at (-6,5) {Software\\Dev.\ \& Impl.};
\node[cat] (ai)   at (-3,5) {AI, Tooling\\\& Docs};
\node[cat] (test) at (0,5)  {Testing\\\& QA};
\node[cat] (rev)  at (3,5)  {Code Review\\\& Ver.\ Ctrl};
\node[cat] (dbg)  at (6,5)  {Debugging\\\& RCA};
\node[above=0.5mm of sw]   {$\pi=24.6\%$};
\node[above=0.5mm of ai]   {$13.8\%$};
\node[above=0.5mm of test] {$9.8\%$};
\node[above=0.5mm of rev]  {$8.4\%$};
\node[above=0.5mm of dbg]  {$8.2\%$};
\foreach \n in {sw,ai,test,rev,dbg}{\draw[->] (traffic) -- (\n);}

\node[sub] (fe)  at (-6,3) {Frontend \&\\UI Dev.\ (7.1\%)};
\node[sub] (git) at (3,3)  {Git Workflow\\\& Repo (5.4\%)};
\draw[->] (sw)  -- (fe);
\draw[->] (rev) -- (git);
\node at (-3,3) {$\cdots$}; \node at (0,3) {$\cdots$}; \node at (6,3) {$\cdots$};

\node[disp] (fE) at (-7.4,1.2) {Easy};
\node[keep] (fM) at (-6,1.2)   {Medium};
\node[keep] (fD) at (-4.6,1.2) {Difficult};
\draw[->] (fe) -- (fE); \draw[->] (fe) -- (fM); \draw[->] (fe) -- (fD);

\node[disp] (gE) at (1.6,1.2) {Easy};
\node[disp] (gM) at (3,1.2)   {Medium};
\node[keep] (gD) at (4.4,1.2) {Difficult};
\draw[->] (git) -- (gE); \draw[->] (git) -- (gM); \draw[->] (git) -- (gD);

\node[font=\scriptsize] at (-7.4,0.3) {100\%}; \node[font=\scriptsize] at (-6,0.3) {50\%};      \node[font=\scriptsize] at (-4.6,0.3) {25\%};
\node[font=\scriptsize] at (-7.4,-0.3){GLM 5.2}; \node[font=\scriptsize] at (-6,-0.3){Opus 4.8}; \node[font=\scriptsize] at (-4.6,-0.3){Opus 4.8};
\node[font=\scriptsize] at (-7.4,-0.9){$\sim$0.9\%$B$};  \node[font=\scriptsize] at (-6,-0.9){---};       \node[font=\scriptsize] at (-4.6,-0.9){---};

\node[font=\scriptsize] at (1.6,0.3) {100\%};  \node[font=\scriptsize] at (3,0.3) {100\%};     \node[font=\scriptsize] at (4.4,0.3) {47\%};
\node[font=\scriptsize] at (1.6,-0.3){GLM 5.2}; \node[font=\scriptsize] at (3,-0.3){GLM 5.2};   \node[font=\scriptsize] at (4.4,-0.3){Opus 4.8};
\node[font=\scriptsize] at (1.6,-0.9){$\sim$0.7\%$B$};  \node[font=\scriptsize] at (3,-0.9){$\sim$0.9\%$B$};      \node[font=\scriptsize] at (4.4,-0.9){---};

\node[rowlab] at (-8.8,5)   {Root category};
\node[rowlab] at (-8.8,3)   {Subcategory};
\node[rowlab] at (-8.8,1.2) {Difficulty};
\node[rowlab] at (-8.8,0.3) {GLM 5.2 pass};
\node[rowlab] at (-8.8,-0.3){Routed model};
\node[rowlab] at (-8.8,-0.9){Illustr.\ saving (\% of monthly spend $B$, per cell)};
\end{tikzpicture}%
}
\caption{Hierarchical decomposition of coding-assistant traffic into task-category $\times$ difficulty cells. Root-category and subcategory shares (\%) are measured cost-weighted shares from production taxonomy discovery; the \emph{GLM 5.2 pass} rates are measured in-harness on graded tasks (Git Workflow: $100/100/47\%$ for easy/medium/difficult; Frontend \& UI: $100/50/25\%$). The two subcategories carry near-identical traffic yet reach \emph{opposite} decisions: GLM 5.2 clears the routing boundary (Eq.~\eqref{eq:boundary}) on the easy--medium Git cells (bold, displaced) but only the easy Frontend cell, so the harder Frontend cells stay on the frontier incumbent (shaded, kept). Per-cell savings figures are \emph{illustrative}, expressed as a percentage of total monthly coding-assistant spend $B$ (not a company-wide dollar total) that assume an illustrative within-subcategory difficulty split; contracted per-token prices and absolute spend figures are withheld.}
\label{fig:hierarchy}
\end{figure*}

Once each displaced cell $c$ has a chosen candidate $m^\star(c)$ and an incumbent $M$, the addressable saving is a traffic-weighted sum of per-task savings across the displaced cells:
\begin{equation}
S \;=\; N \sum_{(k,d)\,\in\,\mathcal{D}} w_{k,d}\,\bigl(\,C_M(k,d) - \mathbb{E}[\mathrm{cost}\mid m^\star,(k,d)]\,\bigr),
\label{eq:waterfall}
\end{equation}
where $N$ is the number of completed tasks per period and $\mathcal{D}$ is the set of cells that clear the routing boundary of Eq.~\eqref{eq:boundary}. Ordering the cells by their contribution to Eq.~\eqref{eq:waterfall} yields a \emph{savings waterfall}: an auditable, dollar-ranked backlog of displacement opportunities that directs benchmark and rollout effort to where the money is. Because each term is gated by Proposition~\ref{prop:dominance}, every entry in the waterfall weakly reduces realized end-to-end cost, not just token cost.

Table~\ref{tab:worked} instantiates the two benchmarked subcategories from Figure~\ref{fig:hierarchy} with measured data. The subcategory cost shares (Frontend \& UI Development at $7.1\%$ and Git Workflow \& Repository Management at $5.4\%$ of total spend) and the per-difficulty GLM~5.2 pass rates are measured---the latter in the same agentic harness developers use---while the within-subcategory difficulty split $\rho_{d\mid k}$ is an illustrative placeholder pending the difficulty grader's per-cell output. The two subcategories carry near-identical traffic yet reach opposite conclusions: GLM~5.2 passes $100\%$ of easy \emph{and} medium Git-workflow tasks, so both cells displace to the mid tier, but its accuracy on Frontend work collapses from $100\%$ (easy) to $50\%$ (medium) to $25\%$ (difficult), so only the easy Frontend cell displaces and the rest stay on the frontier incumbent. This is exactly the per-cell behavior Eq.~\eqref{eq:cellweight} anticipates: displacement is a property of the (subcategory, difficulty) cell, not of the category.

For illustration, the per-cell savings in Figure~\ref{fig:hierarchy} and Table~\ref{tab:worked} are computed proportionally as $S_{k,d}=B\,\pi_k\,\rho_{d\mid k}\,\delta_{\mathrm{tier}}$ over the displaced cells only, where $B$ is the monthly coding-assistant spend, $\pi_k$ the subcategory's measured cost share, $\rho_{d\mid k}$ the illustrative within-subcategory difficulty share, and $\delta_{\mathrm{tier}}=0.35$ the net displacement fraction of a frontier cell's spend recovered by the GLM mid tier (net of escalation). Kept cells contribute zero. Consistent with the confidence tiers of Section~\ref{sec:mix}, only cells backed by adequate, verifier-graded, stable benchmarks are treated as \emph{Confirmed} displacements; subcategories without confirmed coverage are omitted rather than estimated. Candidate models are drawn from the internal model registry and named by capability tier; their contracted prices are deliberately omitted.

The failure modes behind these pass rates are consistent and diagnosable rather than random noise. On golden replay tasks, GLM~5.2 mutates repository state before inspecting an unexpected submodule modification (whereas the frontier model inspects first and preserves local work); it stops at a single database constraint instead of tracing a question across the persistence, service, and UI layers; and it re-inspects an already-completed build instead of advancing to the next workflow step. Each corresponds to a routing signal---unexpected mutable state, cross-layer reasoning, and multi-step state tracking---that our five-dimension difficulty check (scope, next-step clarity, evidence clarity, action risk, and judgment required) uses to escalate borderline cells to the frontier tier.

\begin{table}[t]
\centering
\footnotesize
\begin{tabular}{l l r r l r}
\toprule
\textbf{Subcategory} & \textbf{Diff.} & \textbf{GLM pass} & \textbf{Share} & \textbf{Routed model} & \textbf{Saving (\% of $B$/mo)} \\
\midrule
Frontend \& UI Dev.\ (7.1\%)   & Easy      & 100\% & 35\% & GLM 5.2 (mid)  & $\sim$0.9\% \\
Frontend \& UI Dev.\ (7.1\%)   & Medium    & 50\%  & 45\% & Opus 4.8 (keep) & ---  \\
Frontend \& UI Dev.\ (7.1\%)   & Difficult & 25\%  & 20\% & Opus 4.8 (keep) & ---  \\
\addlinespace
Git Workflow \& Repo (5.4\%)    & Easy      & 100\% & 35\% & GLM 5.2 (mid)  & $\sim$0.7\% \\
Git Workflow \& Repo (5.4\%)    & Medium    & 100\% & 45\% & GLM 5.2 (mid)  & $\sim$0.9\% \\
Git Workflow \& Repo (5.4\%)    & Difficult & 47\%  & 20\% & Opus 4.8 (keep) & ---  \\
\bottomrule
\end{tabular}
\caption{Worked savings waterfall for two benchmarked subcategories (cf.\ Figure~\ref{fig:hierarchy}), out of the full task taxonomy that spans monthly spend in the millions of dollars. Subcategory cost shares and the ``GLM pass'' rates are measured (production taxonomy and in-harness GLM 5.2 grading); the within-subcategory difficulty ``Share'' $\rho_{d\mid k}$ is an illustrative split pending the difficulty grader's per-cell output. A cell is displaced to GLM 5.2 only when its measured pass rate clears the routing boundary (Eq.~\eqref{eq:boundary}); otherwise it is kept on the frontier incumbent. ``Saving'' is an illustrative \emph{per-cell} monthly amount, expressed as a percentage of total monthly spend $B$, computed via Eq.~\eqref{eq:waterfall} with a mid-tier displacement fraction $\delta=0.35$; contracted per-token prices and absolute spend figures are withheld. It is not a company-wide total: only the two subcategories with confirmed benchmark coverage are shown, and every other cell in the taxonomy is omitted rather than estimated.}
\label{tab:worked}
\end{table}

\section{Forecasting and Version Planning}
The forecast decomposes monthly spend as a function of developers, sessions per developer, traffic share by cell, model mix share, token consumption per task, and model price. This decomposition separates the effects of adoption, activity, mix, and price. Version and replacement planning treats model releases as scheduled scenario events on a rolling twelve-month horizon.

There are three event classes. First, same-family version upgrades may reclaim work from previously displaced cells, so they require re-benchmarking and repricing. Second, open-source replacements may arrive on a faster but noisier cadence, requiring legal and export-control review before deployment eligibility. Third, pricing events such as contractual discount expiries must be represented as named scenarios with dates, not surprises.

Because exact release timing is uncertain, forecasts should be published as scenario bands: a base case with announced and contracted events, an upside case where open-source displacement ramps at benchmark-confirmed pace, and a downside case where releases slip or discount expiry arrives without offsetting displacement. Monthly reconciliation should attribute variance to adoption, activity, mix, or price.

\section{From Recommendations to an Intelligent Router}
\label{sec:router}
The routing capability matures through four stages. Stage one is a static policy plus developer guidance matrix with confidence tiers. Stage two is a shadow-mode classifier that predicts the route but does not act. Stage three is a verified cascade that routes to the cheapest eligible model first in categories with cheap verifiers and silently escalates on failure. Stage four is a full intelligent router that combines classifier and cascade.

The cascade is especially important because it removes the need for ex ante difficulty prediction in many categories. If a candidate attempt fails, the system can escalate automatically before the developer notices. This makes cascade routing a pragmatic bridge between static policy and full prediction.

To protect developer trust, the system must include override controls and kill switches from day one. Developers should be able to pin a model per session or per repository, and any subcategory-by-difficulty cell should be able to revert to the incumbent within minutes. Quality telemetry beyond pass rate---such as PR revert rate, reprompt rate, abandonment, and review cycle time---should be monitored continuously.

\section{Governance and Operating Cadence}
\method{} is a continuously operating system, not a one-time study. Traffic drifts as new frameworks appear and subagent usage grows; routing itself changes developer behavior; and models and prices change on a quarterly or faster cadence. Continuous telemetry ingestion and guardrail monitoring should feed back into labels. Monthly operations should include taxonomy reclustering, forecast reconciliation, and leadership review. Quarterly operations should include judge recalibration, benchmark refresh, and routing-policy audit. Event-driven operations should trigger re-benchmarking after model releases or pricing changes before any routing change is allowed to ship.

The primary program KPIs are blended cost per completed task, displaced share of spend, realized versus forecast savings, escalation rate in cascade cells, quality guardrails relative to baseline, unclassified traffic share, and benchmark coverage.

\section{Key Risks and Mitigations}
The main risks are well-defined. First, cost-per-token optimization can backfire if retries and escalations exceed the savings from cheaper tokens; the mitigation is to optimize cost per completed task with escalation priced in. Second, judge bias or drift can corrupt labels; the mitigation is human calibration, neutral judging, verifier-first grading, and quarterly recalibration. Third, benchmark results may not transfer to production; the mitigation is in-harness evaluation, distribution-matched sampling, and validation with production outcome signals. Fourth, ex ante difficulty prediction may be unreliable; the mitigation is cascade routing in verifier-rich categories and shadow-mode evaluation before any classifier is allowed to act. Fifth, taxonomy drift may invalidate the mix; the mitigation is monthly reclustering, stable identifiers, and an unclassified-share KPI. Sixth, new model releases or discount expiries may change the economics; the mitigation is event-driven re-benchmarking and scenario-based forecasting. Seventh, developer trust may erode from silent quality degradation; the mitigation is developer override, kill switches, and richer quality telemetry. Finally, legal or compliance issues may block open-source deployment; the mitigation is to treat legal and export-control review as an explicit eligibility gate.

\section{Discussion}
\method{} shifts routing from token-centric heuristics to task-centric optimization. The framework is practical because it is rooted in telemetry, benchmarked on production-like tasks, and deployable through staged routing. It is also extensible: the same structure supports task classes beyond coding, alternative benchmark sources, and richer cost models that include human review or downstream defect costs.

\method{} is best viewed as a systems framework rather than a single algorithm. Its novelty lies in combining taxonomy discovery, difficulty-aware evaluation, expected-cost routing, and governance under drift. The result is less a better router than an operating model for continuously reallocating work across heterogeneous language models as capabilities, prices, and workloads evolve.

\section{Limitations}
This draft is grounded in a single enterprise setting and therefore does not yet establish broad external validity. The strongest empirical claims require controlled experiments across multiple task domains, multiple model families, and longer time horizons. In addition, parts of the framework depend on reliable verifiers; categories without cheap automatic verifiers will rely more heavily on judges and may be harder to route safely. Finally, this draft describes a methodology and a system architecture, but it does not yet include a full ablation study or a live A/B result set.

\section{Conclusion}
We presented \method{}, a task-to-model optimization framework for enterprise AI coding assistants. The method discovers a task taxonomy from production traffic, grades difficulty, constructs distribution-matched benchmarks, evaluates candidate models in a production harness, and solves for the cost-optimal model mix under quality and latency constraints. Its objective is cost per completed task, not cost per token, and its deployment path progresses from static policy to shadow mode, verified cascade, and eventually an intelligent router. We believe this framing provides a practical and publishable foundation for future work on large-scale LLM routing in enterprise settings.


\begin{thebibliography}{9}

\bibitem{clio}
Tamkin, A., McCain, M., Handa, K., et al.
\newblock Clio: Privacy-Preserving Insights into Real-World AI Use.
\newblock arXiv:2412.13678, 2024.

\bibitem{frugalgpt}
Chen, L., Zaharia, M., Zou, J.
\newblock FrugalGPT: How to Use Large Language Models While Reducing Cost and Improving Performance.
\newblock arXiv:2305.05176, 2023.

\bibitem{routellm}
Ong, I., Almahairi, A., Wu, V., Zhang, W., Willmott, D., Ang, S., Sistla, R., Gonzalez, J.E., Stoica, I.
\newblock RouteLLM: Learning to Route LLMs with Preference Data.
\newblock arXiv:2406.18665, 2024.

\bibitem{hybridllm}
Ding, D., Mallick, A., Wang, C., Sim, R., Mukherjee, S., Ruhle, V., Lakshmanan, L.V.S., Awadallah, A.
\newblock Hybrid LLM: Cost-Efficient and Quality-Aware Query Routing.
\newblock ICLR; arXiv:2404.14618, 2024.

\bibitem{automix}
Madaan, A., Aggarwal, P., Anand, A., et al.
\newblock AutoMix: Automatically Mixing Language Models.
\newblock arXiv:2310.12963, 2023.

\bibitem{shnitzer}
Shnitzer, T., Ou, A., Silva, M., Soule, K., Sun, Y., Solomon, J., Thompson, N., Yurochkin, M.
\newblock Large Language Model Routing with Benchmark Datasets.
\newblock arXiv:2309.15789, 2023.

\bibitem{swebench}
Jimenez, C.E., Yang, J., Wettig, A., Yao, S., Pei, K., Press, O., Narasimhan, K.
\newblock SWE-bench: Can Language Models Resolve Real-World GitHub Issues?
\newblock ICLR; arXiv:2310.06770, 2024.

\bibitem{humaneval}
Chen, M., Tworek, J., Jun, H., et al.
\newblock Evaluating Large Language Models Trained on Code.
\newblock arXiv:2107.03374, 2021.

\bibitem{mtbench}
Zheng, L., Chiang, W.-L., Sheng, Y., et al.
\newblock Judging LLM-as-a-Judge with MT-Bench and Chatbot Arena.
\newblock NeurIPS; arXiv:2306.05685, 2023.

\end{thebibliography}
\end{document}